\documentclass[conference]{IEEEtran}
\usepackage{times}
\usepackage{amsthm, amsfonts, amsmath, amssymb, mathtools}
\usepackage[ruled, linesnumbered]{algorithm2e}
\SetKwInput{KwInput}{Input}
\SetKwInput{KwOutput}{Output}
\usepackage{xcolor}
\usepackage[numbers]{natbib}
\usepackage{multicol}
\usepackage[bookmarks=true]{hyperref}
\usepackage{booktabs}
\usepackage{multirow}
\usepackage[table]{xcolor}
\usepackage{adjustbox}
\usepackage{cancel}
\usepackage{bbm}
\usepackage{cuted}
\usepackage{capt-of} % Required for \captionof
\usepackage{tabularx}

\definecolor{lightgray}{gray}{0.9}
\newcommand{\shade}{\cellcolor{lightgray}}

\usepackage{pgfplots}
\pgfplotsset{compat=1.18}
\definecolor{color1}{RGB}{27, 158, 119}  % Teal (Ours Ph1)
\definecolor{color2}{RGB}{217, 95, 2}    % Orange (Ours Ph2)
\definecolor{color3}{RGB}{117, 112, 179} % Purple (VI)
\definecolor{color4}{RGB}{231, 41, 138}  % Pink (MC)
\definecolor{color5}{RGB}{102, 166, 30}  % Green (MC Dist)

\newcommand{\policy}{\pi}
\newcommand{\vfunc}{V}
\newcommand{\vfuncpi}{\vfunc_{\policy}}
\newcommand{\task}{\mathcal{I}}

\newcommand{\dyn}{f}
\newcommand{\tset}{\mathcal{L}}
\newcommand{\tfunc}{l}

\newcommand{\mdp}{\mathcal{M}}
\newcommand{\dfac}{\gamma}
\newcommand{\sspace}{\mathcal{S}}
\newcommand{\aspace}{\mathcal{A}}
\newcommand{\tdyn}{P}
\newcommand{\state}{s}
\newcommand{\action}{a}

\newcommand{\statetrajspace}{\Xi_\state^{\policy, \tdyn}}
\newcommand{\tidx}{k}
\newcommand{\vfuncpiover}{\tilde{\vfunc}_{\pi}}
\newcommand{\vfuncpiwd}{\overline{\vfunc}_{\pi}}
\newcommand{\dataset}{\mathcal{D}}
\newcommand{\tidxgoal}{N}

\newcommand{\reals}{\mathbb{R}}

\newcommand{\expectation}{\mathbb{E}}

\newtheorem{proposition}{Proposition}
\newtheorem{remark}{Remark}

\begin{document}

% paper title
\title{Offline Policy Evaluation for Manipulation Policies via Discounted Liveness Formulation}

\author{
Hao Wang$^{\dag, \ddag}$~~~
Joshua Bowden$^{\ddag}$~~~
Colton Crosby $^{\ddag}$~~~
Somil Bansal $^{\ddag}$~~~
\\
$^\dag$University of Southern California~~~
$^\ddag$Stanford University~~~
\\
\{\texttt{haowwang, joshuabowden, cjcrosby, somil}\}@stanford.edu
}

\maketitle

\begin{strip}
    \centering
    \includegraphics[width=\textwidth]{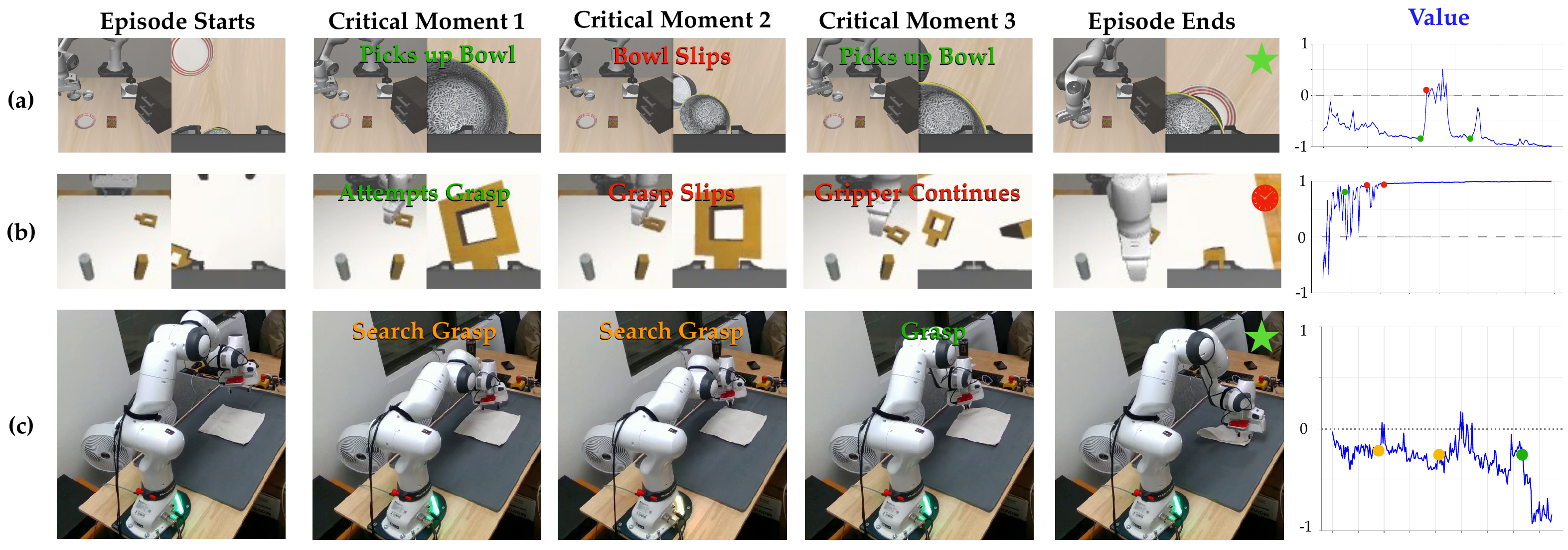}
    \captionof{figure}{In this work, we address the problem of offline policy evaluation for manipulation policies under sparse, episode-level rewards, where finite-horizon rollouts and non-monotonic task progression make standard value estimation methods unreliable. Our method formulates policy evaluation as a discounted liveness problem and learns a value function that maps states in the episode to a proxy of task progression. States close to goal completion are assigned low values although the value function is obtained only using sparse, episode-level rewards. We present episodes from three case studies alongside their corresponding value predictions. The base and wrist camera images are displayed side-by-side in the first five columns, while the final column visualizes the value predictions throughout the episode. We label and explain critical moments within the tasks and their associated values. As shown, the value function trends downward to $-1$ as the task progresses, increases when critical setbacks occur, and decreases again when the policy recovers.}
    \label{fig:teaser}
\end{strip}

\begin{abstract}
Policy evaluation is a fundamental component of the development and deployment pipeline for robotic policies. In modern manipulation systems, this problem is particularly challenging: rewards are often sparse, task progression of evaluation rollouts are often non-monotonic as the policies exhibit recovery behaviors, and evaluation rollouts are necessarily of finite length. This finite length introduces truncation bias, breaking the infinite-horizon assumptions underlying standard methods relying on Bellman equations/principle of optimality. In this work, we propose a framework for offline policy evaluation from sparse rewards based on a liveness-based Bellman operator. Our formulation interprets policy evaluation as a task-completion problem and yields a conservative fixed-point value function that is robust to finite-horizon truncation. We analyze the theoretical properties of the proposed operator, including contraction guarantees, and show how it encodes task progression while mitigating truncation bias. We evaluate our method on two simulated manipulation tasks using both a Vision–Language–Action model and a diffusion policy, and a cloth folding task using human demonstrations. Empirical results demonstrate that our approach more accurately reflects task progress and substantially reduces truncation bias, outperforming classical baselines such as TD(0) and Monte Carlo policy evaluation. The full implementation and experiments are available on the project website \footnote{\url{https://github.com/haowwang/offline_policy_eval_manipulation}}.

\end{abstract}

\IEEEpeerreviewmaketitle

\section{Introduction}
Recent years have seen rapid progress in autonomous robotic systems, driven in large part by generalist policies powered by high-capacity models and large-scale datasets \cite{rt_1, rt_2, pi_05, gemini_robotics_1_5, gr00t, transformer, droid_dataset, bridge_v2_dataset}. As these policies become increasingly capable and are deployed across a growing range of real-world manipulation tasks, reliable policy evaluation has emerged as a critical bottleneck. Policy evaluation plays a central role throughout the development lifecycle: it is used to iteratively improve policies \cite{pi_star_06}, assess readiness for deployment \cite{high_confidence_ope}, and guide policy steering and refinement \cite{steer_your_generalist}.

Fundamentally, the problem of policy evaluation centers on the question of \textit{how good is a given policy in completing its task?} While a large number of frameworks and procedures are developed to study this problem, Reinforcement Learning (RL) \cite{rl_book} remains one of the most enduringly popular frameworks for the problem, in which a \emph{value function} is computed to evaluate the performance of the target policy. The value function takes in a state and outputs a scalar that quantifies the expected performance of the policy from the starting state. Fittingly, the policy evaluation problem is a cornerstone of the RL theory, and classical methods such as iterative policy evaluation \cite{howard_dp_book}, Monte Carlo \cite{Rubinstein_mc_book, singh_replacing_eligibility_traces}, and temporal-difference learning such as TD(0) \cite{sutton_td_learning_1988}, have been applied to a wide array of problems. 

Despite its importance, offline policy evaluation for manipulation policies remains fundamentally challenging.
The difficulty stems from three properties that are ubiquitous in real-world manipulation. 
First, rewards are typically sparse and task-driven: beyond a binary signal indicating task completion, dense rewards are often unavailable or unreliable. Second, practical policy evaluation is necessarily time-limited—episodes terminate after a finite horizon due to manual time-outs. As a result, policy failures and episode truncations are indistinguishable in offline data, introducing the well-known \emph{truncation bias} in the value function, leading to systematic underestimation of policy performance and poor value estimates. Third, real-world manipulation policies often exhibit recovery behaviors, where they keep trying until task completion or manual time-out, and the resulting rollouts have non-monotonic task progression, complicating value function estimation. 

In this work, we address these challenges by reformulating offline policy evaluation as a liveness problem under the formalism of Markov Decision Process (MDP). This problem formulation yields a value function that naturally accommodates sparse rewards and finite-horizon rollouts with non-monotonic task progression. Building on this formulation, we propose a two-stage policy evaluation framework designed to accurately capture task progression and reduce truncation bias. In the first stage, we compute reliable value estimates for states whose outcomes are unambiguous—namely, states that precede observed task completion. In the second stage, we bootstrap the evaluation of the remaining states using the value function from the first stage. Intuitively, this facilitates the backward propagation of value estimates from successful states to their predecessors within truncated episodes. This mechanism effectively overwrites the default pessimistic assumption applied to time-outs, replacing it with value estimates grounded in observed success.

We evaluate our method on two simulated manipulation tasks using visuomotor policies, including a Vision–Language–Action (VLA) model and a diffusion policy, and a hardware task using human demonstrations. Empirical results show that our approach recovers value functions that more accurately reflect task progression and substantially reduce truncation bias, outperforming classical baselines such as TD(0) and Monte Carlo policy evaluation. The contributions of this work are: 1) a liveness-based formulation of offline policy evaluation for sparse-reward manipulation tasks and 2) a two-stage, bootstrapped evaluation algorithm that mitigates truncation bias in finite-horizon offline data.

\section{Background}
\subsection{Markov Decision Process}\label{subsec:mdp}
Markov Decision Process (MDP) is the formalism that underlies modern Reinforcement Learning (RL) theory \cite{rl_book}, and we heavily utilize the framework in this paper. Formally, a MDP $\mdp$ is a tuple $\mdp =  \left(\sspace, \aspace, \tdyn, R, \dfac\right)$, where $\sspace$ and $\aspace$ are the state space and action space, respectively. $P:\sspace\times\sspace\times\aspace\rightarrow[0,1]$ is the transition probability. $R:\sspace\rightarrow\reals$ is the reward function, and $\dfac \in (0,1)$ is the discount factor.

\subsection{Liveness Problem}
In this section, we introduce the notion of liveness. Suppose we have a system described by continuous-time dynamics
\begin{equation}
    \dot{\state} = \dyn(\state,\action)
\end{equation}
where $\state\in\sspace$ and $\action\in\aspace$ are the state and control of the system, respectively, and $\dot{\state}$ denotes the time derivative of state $\state$. 

Intuitively, the problem of liveness concerns with whether the system can ever reach a specific set of states $\tset$ \cite{lygeros_min_time_ctrl}. We call this region of interest the \emph{target set}. Correspondingly, we define the target function $\tfunc:\sspace\rightarrow\reals$ that encodes how far a state $\state$ is away from $\tset$. $\tfunc(\cdot)$ is defined such that for state $\state\in\tset$, $\tfunc(\state)\leq 0$. The liveness problem formulation is particularly popular in the optimal control community, with the goal being synthesis of controllers that can get the system to reach as far into the target set as possible \cite{reachability_tutorial}. However, in this work, we are primarily interested in the liveness problem formulation for evaluation purposes (i.e., determining how far a given policy can reach into the target set), rather than controller/policy synthesis.

\section{Related Work}
\subsection{Classical Policy Evaluation}Classical approaches to policy evaluation estimate the value function $V_\pi(s)$ as the expected discounted return, $G_t = \sum_{k=0}^{\infty} \gamma^k r_{t+k+1}$ \cite{rl_book}. If the transition dynamics $p(s',r|s,a)$ are known, $V_\pi(s)$ can be computed via Iterative Policy Evaluation –  iteratively applying the following Bellman operator: 
$$V_{\pi}(s)=\sum\limits_{a} \pi(a \mid s) \sum\limits_{s^{\prime},r} p(s^{\prime},r \mid s,a)[r+ \gamma V_{\pi}(s^{\prime})]$$

In offline settings where the transition dynamics is unavailable, Monte Carlo (MC) methods \cite{Rubinstein_mc_book} approximate the value function by averaging the observed returns $G_{i,s}$ from the dataset $\mathcal{D}$ using number of visits $N(s)$ to a state: 
$$V(s) \approx \frac{1}{N(s)} \sum_{i=1}^{N(s)} G_{i,s}$$
Standard MC treats this as a regression problem. However, recent large-scale robotic policies like $\pi_{0.6}^{*}$ \cite{pi_star_06} stabilize this by casting value learning as a multi-class classification problem over discretized return bins, learning a distribution over returns rather than a mean. 

Alternatively, Temporal Difference (TD) learning \cite{sutton_td_learning_1988, td_lambda_convergence} reduces the high variance of MC by updating estimates incrementally via bootstrapping. Instead of waiting for a full episode return, TD updates $V_\pi(s)$ using the immediate reward and the current estimate of the next state's value: 

$$V(s) \leftarrow V(s) +\alpha[r + \gamma V(s') - V(s)]$$

While these methods are foundational, they rely on summing rewards over time. In sparse-reward offline settings, MC suffers from high variance due to limited samples, while TD methods often struggle to propagate sparse success signals backward over long horizons due to vanishing gradients.

\section{Problem Formulation}
In this work, we are interested in evaluating the performance of policies by computing their value functions, or in Reinforcement Learning parlance, the \emph{policy evaluation problem} \cite{rl_book}. Intuitively, our goal is to determine whether the policy can complete the given task. Furthermore, we focus on the \emph{offline} setting where we only have access to episodes generated by the policy but not the policy itself, and we only have access to \emph{sparse rewards} (i.e., the only information provided is whether the task is complete at a given state). Though the problem formulation applies broadly, we are especially interested in manipulation policies. 

More formally, given a task $\task$ and a dataset $\dataset = \{(\state^{i}_{1:\tidxgoal_i}, \action_{1:\tidxgoal_i-1}^i)\}_{i=1}^M$, which contains $M$ episodes each of length less than or equal to $T$ generated by a stochastic policy $\policy$ designed to complete $\task$, we aim to determine the value of any state $\state$, under the given sparse rewards.

\section{Method}

\subsection{Policy Evaluation as Liveness Problem}\label{subsec: policy_eval_liveness_mdp}

We consider policy evaluation in sparse-reward tasks where the only reward signal is whether a state is a goal state (i.e., task completion). Manipulation tasks commonly resemble such a task structure. In such settings, value functions computed using cumulative sparse reward formulations struggle to capture the proper progression of the task. A useful measure of task progression is \textit{semantic} closeness to goal completion. Consider our first case study in Fig.~\ref{fig:teaser}~(a), a manipulation task where the policy is permitted to attempt actions until task completion, a setting that is common in real life manipulation tasks. The episode does not exhibit a monotonic task progression, a structure that cumulative sparse reward formulation inherently assumes. The states becomes semantically closer to task completion as the manipulator grasps the object, but upon slipping and dropping the object, the states become farther separated from task completion. We therefore reformulate policy evaluation as a \textit{discounted} liveness problem, where the value of a state reflects the semantic distance between itself and task completion; i.e., a distance metric in the state space $\mathcal{S}$. 

We first define the \textit{undiscounted} liveness problem using the formalism of Markov Decision Process (MDP). We modify the classical MDP formulation introduced in Sec.~\ref{subsec:mdp} by replacing the reward function $R(\cdot)$ with the target function $\tfunc(\cdot)$: $\mdp = \left(\sspace, \aspace, \tdyn, \tfunc, \dfac\right)$. Intuitively, the target function $\tfunc(s)$ is a signed distance function of the state to some target set $\tset$. We define the value of a state $\state$ by 
\begin{equation}\label{eq:vfunc_liveness}
    \vfuncpi(\state) = \expectation_{\xi\sim\statetrajspace}\left[ \min_{\tidx\in\{0,1,\ldots, \tidxgoal\}} \tfunc\Big(\xi(\tidx)\Big)\right]
\end{equation}
where $\xi$ is a state trajectory (or equivalently, an episode) rolled out from state $\state$ under policy $\pi$ and transition dynamics $\tdyn$, and $\statetrajspace$ is the distribution of all such state trajectories. We assume that all episodes are able to achieve the minimum target over $\tidxgoal$ steps (i.e., extending the episodes by continuing to roll out the policy will not lead to a lower target, or equivalently, pushing further into the target set $\tset$). The value of a state $\state$ is the expectation of the lowest target achieved among all possible trajectories rolled out from $\state$. 

We use the principle of dynamic programming to transform the definition of the value function in Eq.~\ref{eq:vfunc_liveness} into a recursive Bellman equation in Eq.~\ref{eq:vfunc_liveness_ineq}.
\begin{proposition}\label{prop:vfunc_liveness_ineq}
The value function in Eq.~\ref{eq:vfunc_liveness} satisfies the following inequality: 
\begin{equation}\label{eq:vfunc_liveness_ineq}
        \vfuncpi(\state) \leq \expectation_\policy \bigg[ \expectation_\tdyn \Big[\min\big\{\tfunc(\state), \vfuncpi(\state') \big\}\Big] \bigg]
\end{equation}
Furthermore, $|\vfuncpi(\state)- \expectation_\policy \bigg[ \expectation_\tdyn \Big[\min\big\{\tfunc(\state), \vfuncpi(\state') \big\}\Big] \bigg]| \leq \sigma_Y$, where $\sigma_Y$ is the expectation over action $\action_0$ and state $\state_1$ of standard deviation of the random variable $Y = \min_{\tidx\in\{1,\ldots, \tidxgoal\}} \tfunc\big(\xi(\tidx)\big)$. 
\end{proposition}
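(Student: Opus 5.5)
The plan is to condition on the first transition and then apply Jensen's inequality twice. First, for the inequality, then for the error bound.

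Fix $\state$ and write a trajectory as $\xi=(\state,\xi(1),\ldots,\xi(\tidxgoal))$, with $\action_0\sim\policy(\cdot\mid\state)$ and $\state_1=\xi(1)\sim\tdyn(\cdot\mid\state,\action_0)$. The minimum over $\tidx\in\{0,\ldots,\tidxgoal\}$ splits as $\min\{\tfunc(\state),Y\}$, where $Y=\min_{\tidx\in\{1,\ldots,\tidxgoal\}}\tfunc(\xi(\tidx))$. By the tower property,
\begin{equation*}
\vfuncpi(\state)=\expectation_\policy\Big[\expectation_\tdyn\big[\,\expectation[\min\{\tfunc(\state),Y\}\mid \action_0,\state_1]\,\big]\Big].
\end{equation*}
By the Markov property and the stated assumption that the minimum target is attained within the horizon (so shifting the window by one step loses nothing), the conditional law of $Y$ given $\state_1=\state'$ is that of the trajectory minimum started from $\state'$. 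Hence $\expectation[Y\mid \state_1=\state']=\vfuncpi(\state')$.

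The map $y\mapsto\min\{c,y\}$ is concave, so conditional Jensen gives
\begin{equation*}
\expectation[\min\{\tfunc(\state),Y\}\mid\state_1]\le\min\{\tfunc(\state),\expectation[Y\mid\state_1]\}=\min\{\tfunc(\state),\vfuncpi(\state_1)\}.
\end{equation*}
Taking outer expectations yields Eq.~\ref{eq:vfunc_liveness_ineq}.

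For the error bound, we control the Jensen gap pointwise. For any constant $c$ and random variable $Y$ with mean $\mu$, the function $y\mapsto\min\{c,y\}$ is $1$-Lipschitz, so
\begin{equation*}
0\le \min\{c,\mu\}-\expectation[\min\{c,Y\}]=\expectation\big[\min\{c,\mu\}-\min\{c,Y\}\big]\le\expectation|Y-\mu|\le\sqrt{\mathrm{Var}(Y)}.
\end{equation*}
The last step is Cauchy--Schwarz. Applying this conditionally on $(\action_0,\state_1)$ bounds the gap by the conditional standard deviation of $Y$. Averaging over $\action_0$ and $\state_1$ and using the triangle inequality for the outer expectations bounds the total gap by $\sigma_Y$, the expected conditional standard deviation as defined in the statement.

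The main obstacle is the identification $\expectation[Y\mid\state_1=\state']=\vfuncpi(\state')$. Strictly, $Y$ ranges over $\tidxgoal$ steps starting at time $1$, while $\vfuncpi(\state')$ uses steps $0$ to $\tidxgoal$ from $\state'$, a window extending one step further. I will handle this explicitly by invoking the paper's assumption that extending a rollout does not lower the attained minimum, so both quantities coincide. I would also note that time-homogeneity of $\policy$ and $\tdyn$ is needed for the shifted trajectory to have law $\Xi_{\state'}^{\policy,\tdyn}$.
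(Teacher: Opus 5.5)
Your proposal is correct and follows essentially the paper's route. The tower property plus Jensen for the concave map $y\mapsto\min\{c,y\}$ gives the inequality, and a $1$-Lipschitz bound plus Cauchy--Schwarz applied conditionally on $(\action_0,\state_1)$ gives the $\sigma_Y$ bound. Using the Lipschitz property of $\min\{c,\cdot\}$ directly lets you skip the paper's three-case split through $\min\{a,b\}=a-\max\{0,a-b\}$, since its Case~3 argument already covers every case. Your explicit handling of $\expectation[Y\mid\state_1]=\vfuncpi(\state_1)$ also exposes a horizon-shift step that the paper uses silently. Note, however, that this step needs the assumption for the $(\tidxgoal-1)$-step window $\{\state_1,\ldots,\state_{\tidxgoal}\}$, which is a slightly stronger reading than the paper's $\tidxgoal$-step statement.
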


This proposition is proved in Appx.~\ref{appx:proof_liveness_update_ineq}. The inequality in Eq.~\ref{eq:vfunc_liveness_ineq} implies that any fixed point of the corresponding equality defines a conservative (pessimistic) estimate of the true liveness value. This gap results from applying Jensen's inequality in the derivation of Eq.~\ref{eq:vfunc_liveness_ineq}. Intuitively, when future trajectories have low variability (e.g., near-deterministic dynamics or policy), the gap is small, and the approximation is tight. Conversely, high variability in future trajectories widens the gap, making the approximation more pessimistic.

Since over-stating task progress (i.e., predicting a lower value than warranted) is undesirable in practice, we deliberately adopt this conservative (pessimistic) estimate in Eq.~\ref{eq:vfunc_liveness_overestimate}:

\begin{equation}\label{eq:vfunc_liveness_overestimate}
    \vfuncpiover(\state) = \expectation_\policy \bigg[ \expectation_\tdyn \Big[\min\big\{\tfunc(\state), \vfuncpiover(\state') \big\}\Big] \bigg]
\end{equation}

\subsection{Practical Approximation of the Liveness Value Function from Offline Data}

In our setting of offline policy evaluation, we approximate the distributions of the policy $\pi$ and transition dynamics $\tdyn$ by the empirical distribution of the dataset $\dataset$ generated using $\pi$. More precisely, 
\begin{align}
    \vfuncpiover(\state) &= \expectation_\policy \bigg[ \expectation_\tdyn \Big[\min\big\{\tfunc(\state), \vfuncpiover(\state') \big\}\Big] \bigg] \notag \\
    &\approx \expectation_\dataset \Big[\min\big\{\tfunc(\state), \vfuncpiover(\state') \big\}\Big] \label{eq:liveness_update_empirical}
\end{align}
Then, following Eq.~\ref{eq:liveness_update_empirical}, we arrive at the following Bellman operator 
\begin{equation}\label{eq:liveness_update}
    \vfuncpiover(\state) = \min\big\{\tfunc(\state), \vfuncpiover(\state') \big\}
\end{equation}

Equation ~\ref{eq:liveness_update} does not lend itself naturally to the notion of task progression, as it only captures the lowest value along the trajectory. Thus, a discount factor $\gamma$ is introduced, and consequently induces a contraction mapping \cite{fisac_bridging_2019}, and the result Prop.~\ref{prop:discounted_liveness_update_contraction} is proved in Appx.~\ref{appx:contraction_proof}.

\begin{equation}\label{eq:discounted_liveness_update}
    \vfuncpiover(\state) = (1-\dfac) + \dfac \min\{\tfunc(\state), \vfuncpiover(\state')\}
\end{equation}

\begin{proposition}\label{prop:discounted_liveness_update_contraction}
The discounted Bellman operator Eq.~\ref{eq:discounted_liveness_update} induces a contraction mapping under the supremum norm.
\end{proposition}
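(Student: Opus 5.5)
We define the operator $B$ acting on bounded functions $V:\sspace\to\reals$ by
\[
(BV)(\state) = (1-\dfac) + \dfac\, \expectation_{\policy}\Big[\expectation_{\tdyn}\big[\min\{\tfunc(\state), V(\state')\}\big]\Big].
\]
This is the expected form corresponding to Eq.~\ref{eq:discounted_liveness_update}. In the empirical, sample-based version the expectation is replaced by the empirical average over transitions in $\dataset$ starting from $\state$, or simply by the single observed successor. The argument below goes through unchanged in every case, because it only uses that the expectation is a monotone linear averaging operator with total mass one.

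The plan is to show $\|BV - BW\|_\infty \le \dfac \|V-W\|_\infty$ for any two bounded $V,W$. The constant term $(1-\dfac)$ cancels in the difference, so
\[
|(BV)(\state)-(BW)(\state)| = \dfac\,\Big|\expectation_{\policy}\expectation_{\tdyn}\big[\min\{\tfunc(\state),V(\state')\} - \min\{\tfunc(\state),W(\state')\}\big]\Big|.
\]
Moving the absolute value inside the expectation by Jensen's (triangle) inequality bounds this by
\[
\dfac\,\expectation_{\policy}\expectation_{\tdyn}\big|\min\{\tfunc(\state),V(\state')\}-\min\{\tfunc(\state),W(\state')\}\big|.
\]

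The key elementary step is that $x\mapsto \min\{c,x\}$ is $1$-Lipschitz, i.e.
\[
|\min\{c,a\}-\min\{c,b\}|\le |a-b|.
\]
I will verify it by cases. If both $a,b\ge c$, the left side is $0$. If both are $\le c$, it equals $|a-b|$. If $a\le c\le b$, it equals $c-a\le b-a$. Applying this with $c=\tfunc(\state)$ bounds the integrand by $|V(\state')-W(\state')|\le \|V-W\|_\infty$. The probability weights sum to one, so the expectation is at most $\|V-W\|_\infty$. Taking the supremum over $\state$ then gives the contraction with modulus $\dfac<1$.

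Finally, I will note that $B$ maps bounded functions to bounded functions, provided $\tfunc$ is bounded or at least $\min\{\tfunc,V\}$ is bounded above and below. This holds for the sparse target function used here, which takes values such as $\pm1$. The space of bounded functions under the sup norm is complete, so Banach's fixed point theorem then gives a unique fixed point $\vfuncpiover$ and convergence of iteration from any initialization.

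The only potential obstacle is interpretive: Eq.~\ref{eq:discounted_liveness_update} is written with a single successor $\state'$, so one must fix what "the operator" acts on. I will state the result for the expectation form, and remark that it covers the sample and empirical versions as special cases. Well-definedness also needs some care, since boundedness of $\tfunc$ must be assumed explicitly. The Lipschitz property of $\min$ is routine.
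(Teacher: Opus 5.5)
Your proposal is correct and is essentially the paper's proof: cancel the constant $(1-\gamma)$, apply the 1-Lipschitz bound $|\min\{c,a\}-\min\{c,b\}|\le|a-b|$ with $c=l(s)$, and take the supremum to get contraction modulus $\gamma$. Averaging over $\pi$ and $P$ via $|\mathbb{E}X|\le\mathbb{E}|X|$, and adding the Banach fixed-point remark, mildly generalizes the paper's single-successor computation but does not change the argument.
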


The target function $\tfunc(\state)$ is defined using the typical sparse reward for manipulation tasks, where the reward signal is only known at task completion (reaching a \textit{goal state}): 
\begin{equation}\label{eq:lx_def}
    \tfunc(\state) = \begin{cases}
        \ \ 1 &\ \text{if $\state$ is a not goal state}\\
        -1 &\ \text{if $\state$ is a goal state}
    \end{cases}
\end{equation}

\noindent
The discounting, along with the sparse reward formulation in Eq.~\ref{eq:lx_def}, leads naturally to the notion of \emph{steps to task completion}. To see this, let us take a state trajectory $\xi_{\state_0}^{\policy, \tdyn} = \{\state_0, \state_1, \ldots \state_\tidxgoal, \state_{\tidxgoal+1} \ldots \}$. Without loss of generality, suppose $\tfunc(\state_\tidxgoal) = -1$ and $\tfunc(\state_k) = 1 \ \forall k \neq \tidxgoal$. Then using the discounted fixed-point Bellman equation Eq.~\ref{eq:discounted_liveness_update}, we have the following 
\begin{align}
    \vfuncpiover(\state_{\tidxgoal - 1}) &= (1-\dfac) + \dfac\vfuncpiover(\state_\tidxgoal) \notag \\
    &= (1-\dfac) + \dfac(-1) \notag \\
    &= 1 - 2\dfac \notag \\
    \vfuncpiover(\state_{\tidxgoal - 2}) &= (1-\dfac) + \dfac\vfuncpiover(\state_{\tidxgoal-1}) \notag\\
    &= (1-\dfac) + \dfac(1 - 2\dfac) \notag \\
    &=  1 - 2\dfac^2\notag \\
    &\ \ \vdots \notag \\
    \vfuncpiover(\state_{\tidxgoal-k}) &= (1-\dfac) + \dfac\vfuncpiover(\state_{\tidxgoal-k+1}) \notag \\
    &= 1 - 2\dfac^{k}\  \forall \ 0\leq k\leq \tidxgoal \label{eq:sol_to_recur_relation}
\end{align}

\noindent
Eq.~\ref{eq:sol_to_recur_relation} is the solution to the recurrence relation in Eq.~\ref{eq:liveness_recur_relation}
\begin{equation}\label{eq:liveness_recur_relation}
    \vfuncpiover(\state_{\tidxgoal-k}) = (1-\dfac) + \dfac\vfuncpiover(\state_{\tidxgoal-k+1}),\ \vfuncpiover(\state_{\tidxgoal}) = -1
\end{equation}
The number of steps to task completion, denoted by $k$, from state $\state_{\tidxgoal-k}$ is related to its value $\vfuncpiover(\state_{\tidxgoal-k})$ by $k = \log_\dfac\frac{1-\vfuncpiover(\state_{\tidxgoal-k})}{2}$. However, with the current sparse reward target function $\tfunc(s)$ in Eq.~\ref{eq:lx_def}, we only capture monotonic task progression.

\subsection{Refining the Target Function via Bootstrapping}\label{subsec:bootstrap}

\begin{figure}[t!]
    \centering
    \includegraphics[width=0.9\linewidth]{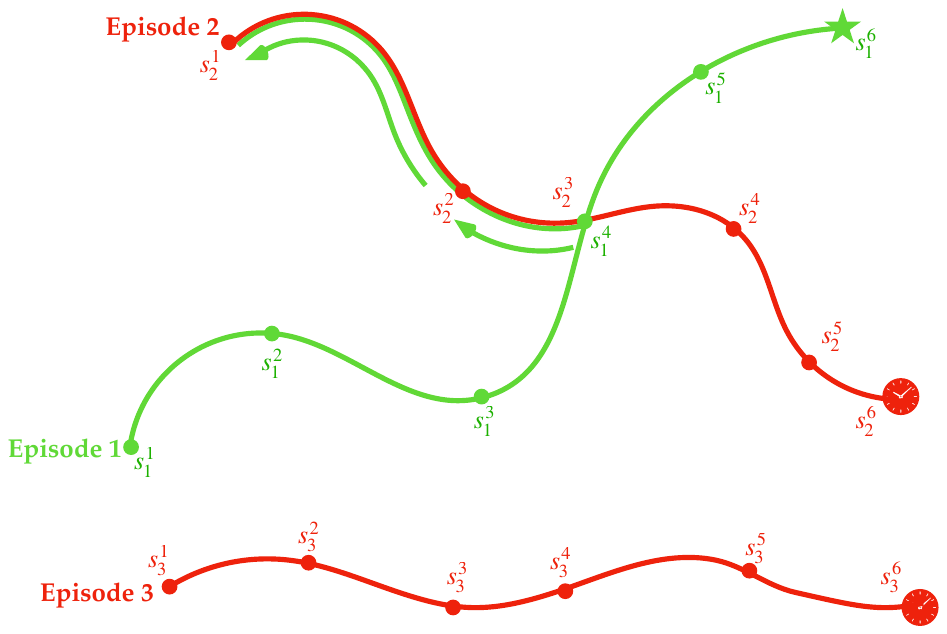}
    \caption{Visualization of the bootstrap mechanism introduced in Sec.~\ref{subsec:bootstrap}. Episode 1 contains 2 goal states, and all states in this episode have well-defined values (i.e., $\vfuncpiwd(\state_1^k)<1 \ \forall \ 0<k\leq6$). Episode 2 is terminated by time-out, but one of its states $\state_2^3$ intersects with Episode 1. Lastly, Episode 3 is terminated by time-out and does not intersect with any episodes with goal states. In this case, all of its states take on the default sparse reward value of 1.}
    \label{fig:bootstrap}
\end{figure}

In order to fully capture the non-monotonic task progression, the target function $\tfunc(s)$ must provide a notion of distance to the target set, which is not provided by the binary sparse reward signal. We identify states where the policy’s performance is unambiguous. Let $\dataset'\subseteq\dataset$ be the set of success episodes (reaches a goal state), and $\dataset\backslash\dataset' \subseteq \dataset$ be the set of timed-out episodes (never reaches any goal states). We compute the values of all descendant states of all goal states within $\dataset'$ using the Bellman operator in Eq.~\ref{eq:discounted_liveness_update}, and the values of these states are said to be \emph{well-defined}. These values serve as ``anchors", as they are computed using observed success from goal states. Let value function $\overline{\vfuncpi}$ take on the values of the states with well-defined values, and for all states $\state$ without well-defined values, $\overline{\vfuncpi}(\state) = 1$. 

Next, we bootstrap the target function $\tfunc(\state)$ with $\vfuncpiwd(\state)$ to provide a more accurate signal for rough distance to the target set. For example, states $\left\{s_1^{1},...s_1^5\right\}$ were initially pessimistically labeled with $l(s)=1$, but their updated targets are $\vfuncpiwd(\state)<1$. We apply the Bellman operator in Eq.~\ref{eq:discounted_liveness_update} to the dataset $\dataset$. For clarity, the Bellman operator with the bootstrapped $\tfunc(\state)$ is as follows
\begin{equation}\label{eq:liveness_update_with_z_bootstrap}
    \vfuncpiover(\state) = (1-\dfac) + \dfac \min\{\vfuncpiwd(\state), \vfuncpiover(\state')\}
\end{equation}

Importantly, the $\min$ operator in Eq.~\ref{eq:discounted_liveness_update} acts as a ``correction filter". We illustrate the mechanism in Fig.~\ref{fig:bootstrap}. Let us take Episode 2, denoted by $\xi_2$, an timed-out episode (without any goal states), and we take state $\state_2^3\in\xi_2$. Since $\xi_2$ is terminated at state $\state_2^6$, using Eq.~\ref{eq:discounted_liveness_update}, we compute that $\vfuncpiover(\state_2^6) = \vfuncpiover(\state_2^5) = \vfuncpiover(\state_2^4) = 1$. However, $\state_2^3$ is a state in a successful episode ($\state_1^4$ in Episode 1), via bootstrapping, the known well-defined value $\vfuncpiwd(\state_2^3)$ immediately overrides the pessimistic time-out assumption of $\vfuncpiover(\state_2^4) = 1$ as $\min\{\vfuncpiwd(\state_2^3), \vfuncpiover(\state_2^4)\} = \vfuncpiwd(\state_2^3)$. Furthermore, for the predecessor state $\state_2^2$ of $\state_2^3$, though $\state_2^2$ is not a state of any successful episodes (i.e., $\vfuncpiwd(\state_2^2) = 1$), the corrected value of $\vfuncpiover(\state_2^3)$ will be propagated backward via $\min\{\vfuncpiwd(\state_2^2), \vfuncpiover(\state_2^3)\} = \vfuncpiover(\state_2^3)$. The corrected value $\vfuncpiover(\state_2^2)$ will be similarly propagated backward to the initial state $\state_2^1$ of $\xi_2$. On the other hand, none of the states in Episode 3 can have their value corrected so they will take on the given pessimistic sparse reward value of 1. 

Effectively, an episode is only penalized as a failure if it neither contains goal states nor overlaps with any episode that does. We summarize our overall method in Algo.~\ref{algo:our_method}.

\begin{remark}
    In practice, the state space is continuous, and as a result we would never encounter two identical states in the dataset. But the intuition holds that if two states are similar enough, the bootstrapping mechanism will trigger as if they are identical. This is further discussed in detail in Sec.~\ref{sec:discussion}. 
\end{remark}

\begin{algorithm}\label{algo:our_method}
\DontPrintSemicolon
  
  \KwInput{Dataset $\dataset$}
  \KwOutput{Value function $\vfuncpiover$}
  
  Compute $\overline{\vfuncpi}$ using subset $\dataset'$ of dataset $\dataset$\;
  Bootstrap $\overline{\tfunc(\state)}$ using $\overline{\vfuncpi}$\;
  Compute $\vfuncpiover$ using $\overline{\tfunc(\state)}$ and Eq.~\ref{eq:discounted_liveness_update}\;
  \Return{$\vfuncpiover$}\;
  
\caption{Two-Stage Policy Evaluation Framework}
\end{algorithm}

\subsection{Reduction of Truncation Bias from Manual Time-outs}

We have established a value function that is capable of capturing non-monotonic task progression often seen in manipulation tasks, and a major benefit of the bootstrapping mechanism is the reduction of \textit{truncation bias}. Truncation bias is introduced when episodes that are terminated via time-out are pessimistically assumed to have timed-out in a way that the policy will never reach the goal, even if given more time. Distinguishing between policy failures and episode truncations is a key practical challenge in policy evaluation, and the resulting bias leads to value functions that underestimate policy performance.

To see how our formulation reduces the truncation bias, we take Episode 2 in Fig.~\ref{fig:bootstrap}. Since it is manually timed-out, all of its states would have taken on the pessimistic value of $1$ without the bootstrapping mechanism. However, the bootstrapping mechanism updates the values for states $\state_2^3$, $\state_2^2$, and $\state_2^1$ to optimistic values that more accurately reflect the actual task progression, effectively reducing truncation bias. 

\section{Experiments}

\begin{figure*}[t]
    \centering
    \includegraphics[width=\textwidth]{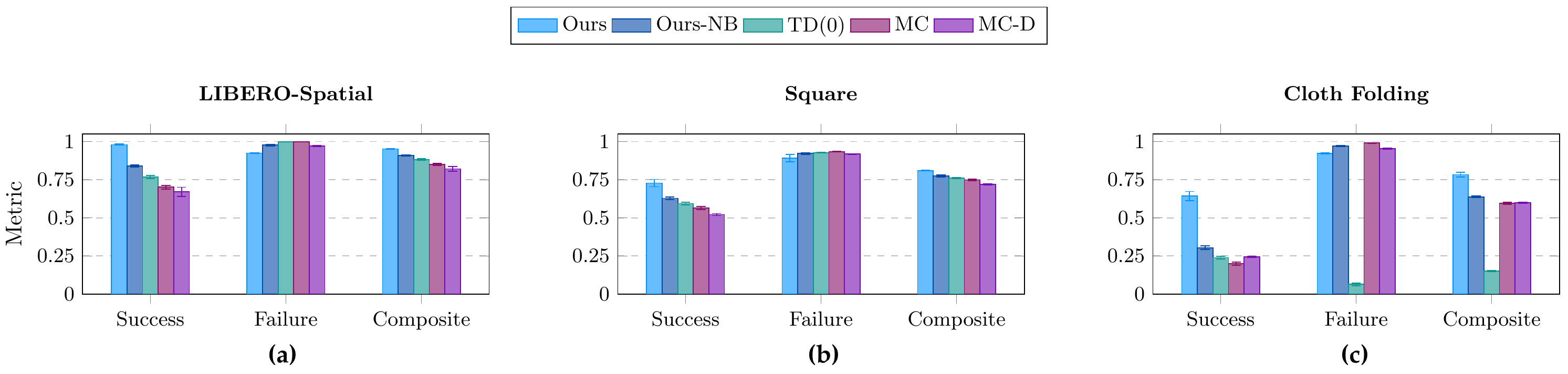}
    \caption{Success, failure, and composite metrics for each method on the (a) LIBERO-Spatial, (b) Square, and (c) Cloth Folding tasks (dataset size 200 for the simulation experiments and 150 for the hardware experiment, 5 training seeds, higher is better; error bars denote standard deviation).}
    \label{fig:metrics_results_plot}
\end{figure*}

We assess the performance of our method, along with several baselines, in 3 case studies: a pick and place task from LIBERO \cite{libero}, a square peg and hole insertion task from Robomimic \cite{robomimic2021} (referred to as the Square task), and a cloth folding task on a Franka Panda hardware platform. Each case study utilizes a dataset generated by the target policy consisting of observations consumed by the policy and the success/timed-out binary label. All methods perform offline policy evaluation by estimating the value function of the target policy.

\vspace{0.5em}
\noindent \textbf{Baselines.} We compare our method against four baselines: (1) \textbf{Ours-NB}, an ablation of our method without bootstrapping the target function $\tfunc(\cdot)$; (2) \textbf{TD}, a standard TD(0) implementation \cite{sutton_td_learning_1988}; (3) \textbf{MC}, a Monte Carlo baseline \cite{rl_book}; and (4) \textbf{MC-D}, the Distributional Monte Carlo implementation from \cite{pi_star_06}. The baseline Ours-NB uses the sparse reward from Eq.~\ref{eq:lx_def}, while the remaining baselines use the following equivalent sparse reward from \cite{pi_star_06}:
\begin{equation}
    r_t = \begin{cases} 
0 & \text{if } t = T \text{ and success} \\
-C_{\text{fail}} & \text{if } t = T \text{ and failure} \\
-1 & \text{otherwise}.
\end{cases}
\end{equation}
where $C_{\text{fail}}$ is a large constant. 

\vspace{0.5em}
\noindent 
\textbf{Value function learning. }All the value functions are parameterized as neural networks. The inputs to the neural networks are the concatenation of SigLIP2 \cite{siglip2} latent embeddings of image observations and proprioception states, matching the input space of the evaluated policy. Additionally, we utilize prioritized experience replay \cite{per} during training. The hyperparameters for the neural networks, training, and prioritized experience replay are provided in Appx.~\ref{appx:case_study_hyperparameters}.

\vspace{0.5em}
\noindent 
\textbf{Evaluation metrics.} We consider 3 metrics. The first metric is the \textbf{success metric}. Given a (sub)episode of length $\tidxgoal$, where the target policy successfully reaches a goal state, we convert the value at each frame into an estimated number of steps to the goal state. The metric is defined as the percentage of frames where the estimated step count is less than $\tidxgoal$. A high percentage indicates that the model correctly predicts that the goal can be reached within the actual episode duration $\tidxgoal$.

The second metric, referred to as the \textbf{failure metric}, assesses the value function's ability to recognize frames associated with timed-out episodes (i.e., containing no goal states). Given a failure episode, we compare the estimated steps to success against a standard task horizon $N_s$, a threshold representing the typical maximum number of steps required to solve the task from any starting state. The metric is defined as the percentage of frames where the inferred step to success exceeds $N_s$. A high percentage indicates that the model correctly predicts that the goal is effectively unreachable within the task horizon.

The third metric is the \textbf{composite metric}, which is defined to be the arithmetic mean of the success and failure metrics. We utilize this metric to capture the trade-off between the success and failure metrics. 

\begin{remark}[Relation to standard classification metrics]
The success and failure metrics defined above involve thresholding the value to a binary label, which may suggest that standard classification metrics are applicable in our case studies. However, despite the thresholding, standard classification metrics (e.g., precision, recall, and F1) are not directly interpretable in our setting because binary labels are not well-defined for all states due to recovery behaviors and manual time-outs. Specifically, in successful episodes, intermediate states may exhibit recovery behaviors before eventual task completion, making the notion of steps-to-success ambiguous. Therefore, the success metric is restricted to the final $N$-step subtrajectory where progress is monotonic and steps-to-success are well-defined. Earlier states in the same episode are excluded from evaluation. The failure metric is defined as the percentage of states in failure episodes whose predicted steps-to-success exceeds $N_s$, since these states are empirically unlikely to lead to task completion within $N_s$ steps. Furthermore, a state not classified as successful does not necessarily imply failure — it may still be recoverable. As a result, the two metrics are defined over disjoint subsets of states, and there is no consistent binary labeling over the full dataset, making standard classification metrics unsuitable for this problem setting. 
\end{remark}

\subsection{Case Study 1: LIBERO Pick and Place}\label{subsec: case_study_libero}

The target policy for this case study is Physical Intelligence's $\pi_0$ \cite{pi0} with a fixed language command of ``\textit{pick up the black bowl next to the ramekin and place it on the plate}". The task is part of the LIBERO-Spatial task suite \cite{libero}, a kitchen table-top simulation environment with various objects where a Franka Panda arm attempts to pick up a bowl and put in on a plate. We collected 200 training and 100 testing episodes, each with $50\%$ success/timed-out split. The manual time-out is set to 250 steps for this case study.

We first provide a qualitative example of our method in action in Fig.~\ref{fig:teaser}~(a). Intuitively, a value of $-1$ corresponds to imminent success, and value of $1$ corresponds to requiring infinite steps to complete the task (i.e., a definitive failure). When the episode starts, the gripper is very close to the bowl, and our method assigns a moderate negative value. The manipulator picks up the bowl and the value decreases since the policy is making good progress. Then, the bowl slips out of the gripper, and the value increases sharply, correctly indicating that the state is now significantly farther from the goal, semantically. Finally, the policy recovers by picking up the bowl again and completes the task. Correspondingly, the value tends toward $-1$ as the policy progresses towards task completion. 

We now discuss the quantitative results for the 3 metrics presented in Fig.~\ref{fig:metrics_results_plot}~(a). Our method achieves the best performance in both the success metric and the composite metric, while remaining competitive in the failure metric. Notably, the policy in this experiment is capable of recovering from various setbacks to achieve success if more time is given. Our bootstrapping mechanism is able to correct the pessimistic values of many states in timed-out episodes to reflect a more accurate value for the state given the policy's capabilities. This largely explains the wide margin in the success metric between our method and the baselines.

\subsection{Case Study 2: Square Peg and Hole Insertion}
In this case study, we evaluate a diffusion policy \cite{chi2023diffusionpolicy} trained to perform a peg and hole insertion task simulated in Robomimic \cite{robomimic2021}. The task involves a Franka Panda arm grasping a square nut and placing it on a square peg. We again collected 200 and 100 episode datasets with 50\% success splits each, for training and testing, respectively. The manual time-out is set to 200 steps.

We present the quantitative results in Fig.~\ref{fig:metrics_results_plot}~(b), where our method achieves the best performance in the success metric while remaining competitive in the failure metric. However, both our method and the baselines perform substantially worse in the success metric compared to the LIBERO-Spatial case study. Unlike the VLA-driven LIBERO-Spatial task, the diffusion policy is trained on a single task without expert recovery behavior, meaning unrecoverable failures occur in the episodes at unknown times. Additionally, there are fewer visual cues in the image, which is dominated primarily by the static background. Thus, successful trajectories are often visually identical to timed-out episodes until a setback occurs (e.g., a sudden slip, poor grasp, or bad peg alignment). Hence, the value function is unlikely to benefit from the bootstrapping mechanism when many opposite-label states nearly overlap in the image space, or have similar semantic meaning. This noise in the image-label mapping may result in the decreased success metric score, as the value function is uncertain that the goal can be reached within the task horizon until a strong visual cue (e.g., good grasp, proper alignment) has been observed.

One emergent property of the value function is its predictive capability. We share one example in Fig.~\ref{fig:teaser}~(b). In this episode, the gripper does not align well with the square nut leading up to the grasp, and at the time of the grasp being established, the value is already close to 1, strongly indicating that success is unlikely. A few steps later, the square nut slips out of the gripper, and the policy never recovers. The value function is able to associate the misaligned grasp with the subsequent slippage. This predictability could be due to the strong semantic cue for a poor grasp versus a proper grasp, which is correlated with task success. 

On the flip side, the value function is not able to preemptively capture failure moments that are ``unexpected". For instance, the square nut sometimes slips out of the gripper despite an established good grasp, at least to the human eyes. The value function, at best, can react to the slippage as it happens but not preemptively. Such slippage events may occasionally act in the opposite way, for example, in Fig.~\ref{fig:square_task_fig_379}. The manipulator grasps the square block poorly (an off-center grasp is more often associated with failure), but the grasp happens to slip into a proper grasp after contacting the peg, and the task is completed successfully. This false-positive behavior draws down the performance metrics, but indicates that the value function has learned that an anomalous grasp is likely to lead to failure and adjusts accordingly upon making a proper grasp.

\begin{figure}
    \centering
    \includegraphics[width=1\linewidth]{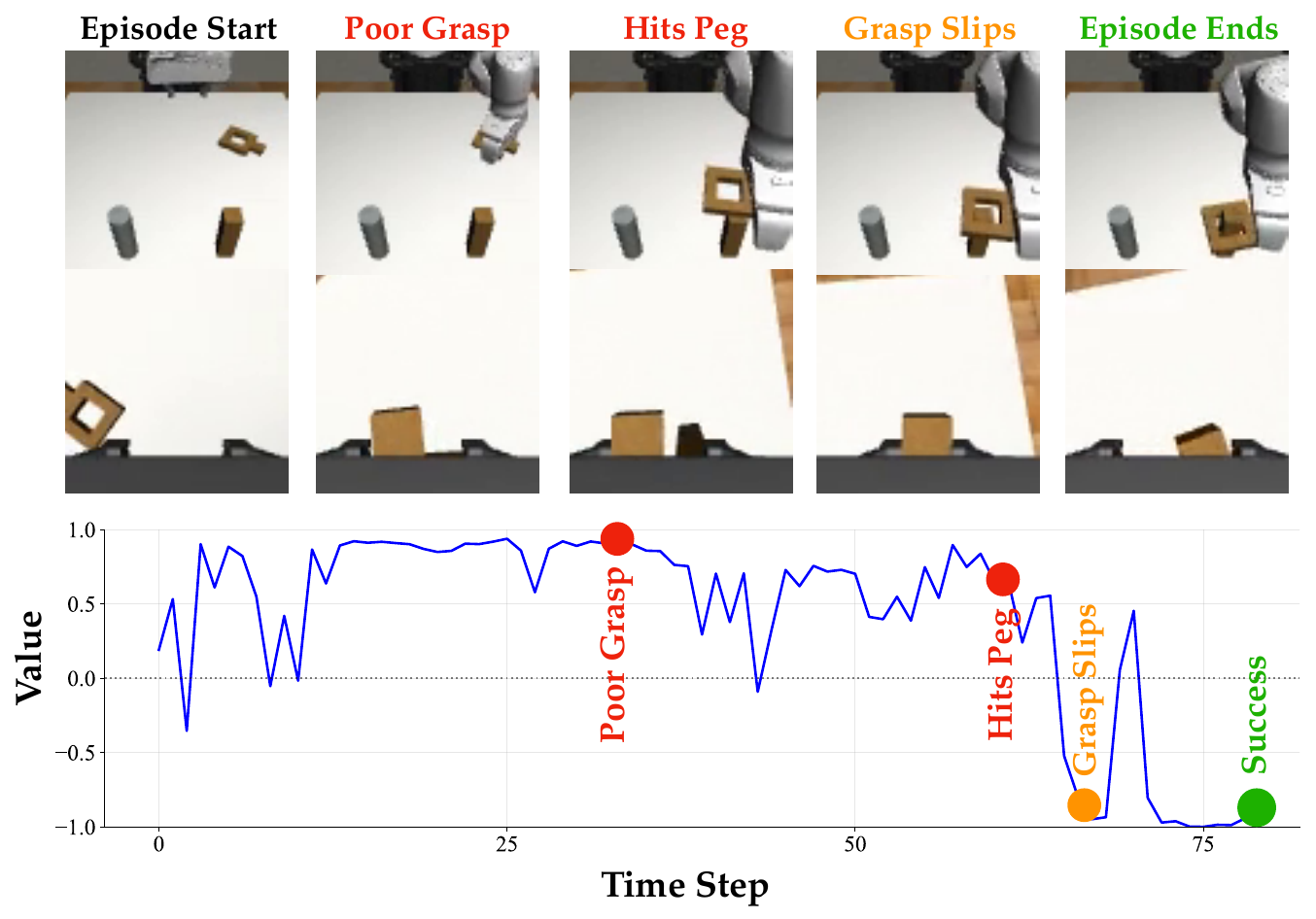}
    \caption{Our method's predicted values over a successful Square task episode showing a false-positive high value due to a poor grasp that managed to succeed.}
    \label{fig:square_task_fig_379}
\end{figure}

\subsection{Case Study 3: Human Cloth Folding}

We conduct a hardware experiment on a cloth folding task using a Franka Panda platform, where the value function is learned from human demonstrations. A human operator teleoperates the robot via a Meta Quest 3 headset and controller, with the goal of folding a rectangular towel into a triangle along one of its diagonals. This setting is substantially more challenging than the simulated tabletop tasks for three reasons: (1) the cloth is deformable and difficult to manipulate, (2) human teleoperation behavior is inconsistent across episodes, and (3) labels are noisy, since human visual inspection sometimes assigns different labels to visually similar end states.

We train all methods on 150 episodes (75 success, 75 timed-out) and evaluate on 20 held-out episodes (10 success, 10 timed-out). The manual time-out is set to 300 steps in this case study. Unlike the simulated experiments, the value function input here consists solely of the encoded latent of the base image, as this is the only observation available to the operator during teleoperation.

We first present a qualitative example in Fig.~\ref{fig:teaser}~(c). For most of the episode, the operator struggles to establish a good grasp on the cloth, and the value oscillates near zero, reflecting stagnant task progression. Once a successful grasp is established and the folding begins, the value drops sharply toward $-1$, indicating clear progress toward task completion.

Quantitative results are shown in Fig.~\ref{fig:metrics_results_plot}~(c). Our method outperforms all baselines on the success metric by a wide margin, more than 2$\times$ the next-best baseline (Ours-NB), while trailing slightly on the failure metric. Two factors contribute to this outcome: (1) recovery behaviors are prevalent throughout the dataset because the operator is not adept at teleoperated cloth manipulation, a setting in which our bootstrapping mechanism is particularly effective; and (2) the operator does not exhibit clear failure modes, and as a result observations from success and timed-out episodes are often visually similar, complicating value function learning.

The cloth folding task represents the most extreme instance of the conditions our method targets: frequent recovery behavior, and unpredictable episode truncation caused by fixed-time time-outs that cut off episodes at varying stages of task progression. The wide success metric margin over baselines suggests that the bootstrapping mechanism is most effective precisely in these settings.

\subsection{Ablating the Bootstrapping}
To isolate the contribution of the bootstrapping mechanism, we consider an ablation of our method, \textbf{Ours-NB}, where the target function is simply the sparse rewards defined in Eq.~\ref{eq:lx_def}. As we show in our experiments, it underperforms compared to its bootstrapped counterpart in all 3 case studies, performing worse on the success and composite metrics but slightly better on the failure metric (Fig.~\ref{fig:metrics_results_plot}). This observation supports the conclusion that the bootstrapping increases success metric accuracy at a small sacrifice of failure metric accuracy.

\subsection{Ablation On Dataset Size}\label{subsec:dataset_size_ablation}

\begin{figure*}[t]
    \centering
    \includegraphics[width=0.9\textwidth]{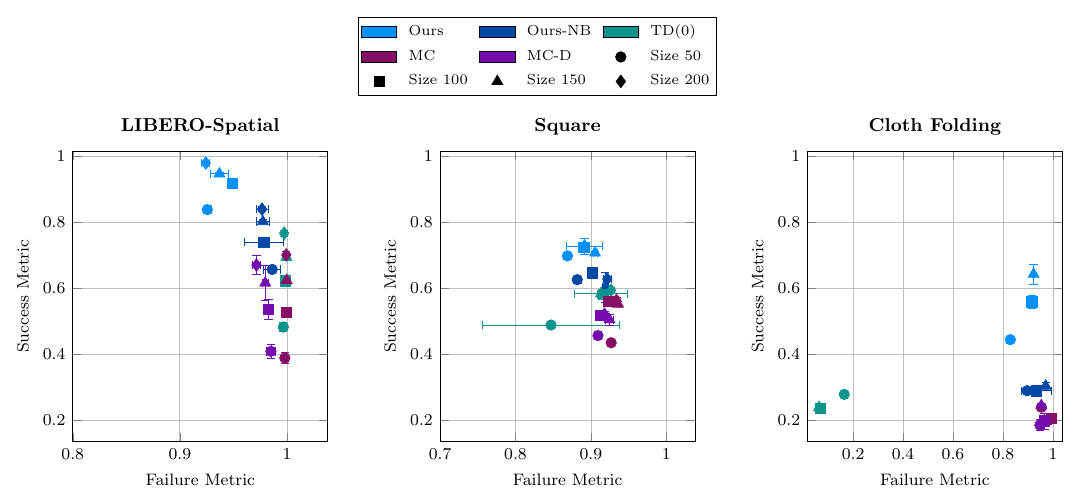}
    \caption{Dataset size ablation showing success metric vs.\ failure metric (higher is better for both) for each method on the (a) LIBERO-Spatial, (b) Square, and (c) Cloth Folding tasks across 5 training seeds. Colors distinguish methods while shapes distinguish dataset sizes; error bars denote standard deviation. Cloth Folding is evaluated only up to dataset size 150.}
    \label{fig:dataset_ablation_fig}
\end{figure*}

Since we are trying to evaluate policies with their actual rollouts, sample efficiency is crucial for the practical feasibility of our method. We conduct an ablation study for each case study to assess the performance of all the methods under different dataset sizes. Recall that we collected 200 episodes in the simulation experiments and 150 episodes in the hardware experiment for training, with 50\%-50\% successful/timed-out episodes split, in each case study. We subsample 50, 100, and 150 episodes, maintaining a 50\%-50\% success/time-out split, and run all the methods on the smaller datasets. 

We visualize the success metric and failure metric results for all 3 case studies in Fig.~\ref{fig:dataset_ablation_fig}. As the dataset size increases, our method improves on both the success and failure metrics, and the variance on the performance decreases. Furthermore, our method outperforms the baseline in the success metric while being competitive in the failure metric, across all dataset sizes. 

Notably, our method with the smallest dataset sizes still maintains its much improved success metric and competitive failure metric when compared to baseline methods with the full dataset. This shows that our method remains useful in low-data regimes.

\subsection{Ablation on Embedding Space}\label{subsec:ablation_embedding}

\begin{figure}[t!]
    \centering
    \includegraphics[width=1\linewidth]{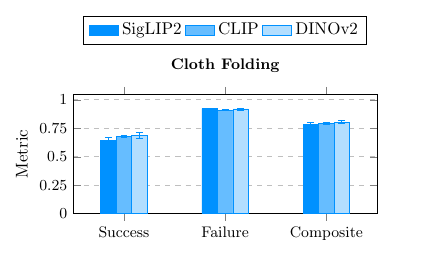}
    \caption{Encoder ablation on the Cloth Folding task (dataset size 150, 5 training seeds, higher is better; error bars denote standard deviation) comparing SigLIP2, CLIP, and DINOv2 as the image encoder for our method.}
    \label{fig:encoder_ablation_results}
\end{figure}

While the default image encoder used in all experiments is SigLIP2~\cite{siglip2}, we additionally conduct an ablation on the choice of image encoder in the cloth folding task with the full dataset (150 episodes). We run our method with SigLIP2~\cite{siglip2}, DINOv2~\cite{oquab2023dinov2}, and CLIP~\cite{clip}, with quantitative results across all metrics shown in Fig.~\ref{fig:encoder_ablation_results}.

Our method performs comparably across the three encoders on all metrics, indicating that it is largely insensitive to the choice of image encoder. This is expected, as large pretrained image encoders are generally competent at retaining task-relevant features. So long as the necessary information is preserved in the latent representation, our method can correlate these features with semantic task progress. Conversely, if the encoder fails to retain such information, we would expect any method, including ours, to degrade.

\subsection{Statistical Significance}

To assess statistical significance, we apply the Alexander-Govern test~\cite{alexander1994} to detect overall differences among methods, followed by pairwise Welch's t-tests~\cite{welch1947} with Benjamini-Hochberg correction~\cite{benjamini1995} to control the false discovery rate, both at significance level $\alpha=0.05$. Across all case studies and dataset sizes, our method achieves statistically significant improvements over all baselines on both the success and composite metrics.

\section{Discussion}\label{sec:discussion}

In this section, we discuss some observations derived from considering all the experiment results across 3 case studies and different dataset sizes. 
\subsection{Qualitative Analysis of Our Method}
Our first observation is that our method consistently achieves the best success metric, and this is expected due to the bootstrapping mechanism described in Sec.~\ref{subsec:bootstrap}. In theory, the bootstrapping mechanism ensures that states on a timed-out episode take on values less than 1 as long as they have a descendant state that is previously seen on a success episode, effectively making the values of those states more optimistic than the default pessimistic value of 1. Since the value function is a learned neural approximator with data from the dataset of episodes, the bootstrapping mechanism does not apply exclusively to the states on timed-out episodes with descendant states seen on other success episodes, but rather states from timed-out episodes that look similar enough to states on success episodes. As a result, a large number of states from timed-out episodes have their values corrected lower (i.e., taking more optimistic values). However, the effectiveness of the bootstrapping mechanism varies depending on the target policy and the task. For policies that exhibit more recovery behaviors, the bootstrapping mechanism is beneficial for capturing the non-monotonic progression of the task, as the timed-out episodes generated from such policies are more severely affected by the truncation bias.  

Our second observation is that the \texttt{min} operator, that corrects the pessimism induced by failure labels and truncation bias, encourages the network to learn non-monotonic task progression. Fig.~\ref{fig:teaser}~(a) shows our method will recognize a recoverable failure (dropping the bowl) and result in an increase in value, indicating it will take more steps to complete the task, until the manipulator adjusts to a position from which it may recover and the value then decreases as it picks up the bowl and completes the task. Fig.~\ref{fig:square_task_fig_379} shows similar behavior, where an anomalous grasp is flagged as a high-value failure-prone behavior until the grasp is adjusted to a proper grasp and the value accordingly decreases toward the task completion. Conversely, Fig.~\ref{fig:teaser}~(b) shows how when an \textit{irrecoverable} failure occurs (the manipulator drops the object and the policy is not trained with recovery behavior from such a state), the value remains as high as possible, i.e., infinite steps to success.

The third observation is that our method consistently achieves the worst failure metric, and this is also expected as a consequence of the bootstrapping mechanism. As we have mentioned in the previous observation, states from timed-out episodes that look similar enough to states on success episodes will take on more optimistic values. In this case, this is to the detriment of our method, as some timed-out episodes should not have their states updated with more optimistic values. But since some of the states look similar enough to some states on success episodes, their values and their predecessors' values get updated to be more optimistic than they should be. This problem is especially severe if the task environment does not contain rich visual cues (i.e., states are not very distinguishable).

\subsection{Practical Considerations}
Based on the first and third observations, it seems like our method essentially renders the value function more optimistic. The reality is more nuanced. Our method is able to maintain a competitive failure metric and simultaneously substantially improve upon the success metric. In some sense, our method is paying a small penalty in the failure metric for the larger improvement on the success metric. It is worthwhile to point out that our method does not push the values of all states to be more optimistic, but it only does so for states that can be closely associated with known success states. Such state association depends on the latent/pixel space similarity for the visuomotor policy input. This soft overlap is likely handled by the neural network learning a smooth manifold in the SigLIP2 feature space, such that perceptually similar inputs lead to similar network outputs; i.e., semantic distance. This may implicitly build a graph similar to the tabular case. This mechanism allows the Bellman update to propagate success signals backward through failure trajectories.

This may explain the performance divergence between the LIBERO-Spatial/Cloth Folding tasks and the Square tasks. In the LIBERO-Spatial and Cloth Folding tasks, semantically similar events (e.g., grasping the bowl/cloth) could occur multiple times in both successful and timed-out episodes, allowing the network to learn the association between this behavior and the label without penalizing recovery attempts. However, in the Square task, no recoveries were attempted (dropping the object is irrecoverable under the policy), so the initial $l(s)$ will tend to be overly conservative, reducing the benefit of the bootstrapping mechanism.

\section{Conclusion} 
\label{sec:conclusion}
We present a framework for offline evaluation of visuomotor manipulation policies with sparse rewards and truncation bias from finite-length episodes. By reformulating policy evaluation as a liveness problem, our method accurately captures task progression and reduces truncation bias.

This work has several limitations. First, the bootstrapping mechanism can suffer from over-optimism because the overlap of states is learned by the network, leaving no definite boundary for deciding which timed-out episodes are considered to have intersected success episodes (as in Fig.~\ref{fig:bootstrap}). As a result, the effectiveness of the bootstrapping mechanism depends heavily on the image/latent inputs to the network, affecting the initial estimation $\vfuncpiwd(\state)$ in the first stage. Second, the performance of our method depends heavily on the dataset, and it is not expected to be reliable outside its training distribution. Depending on the makeup of the dataset, the value function can skew optimistic or pessimistic, and currently it is unclear to us how to characterize such bias systematically. Third, the performance of our method was shown to decrease with a policy that does not exhibit recovery behaviors, which is an expected result from the bootstrapping. 

Future work in this direction could cover the over-optimism from the state overlap problem by implementing some structured boundary defining state similarity. Additionally, improvements in data efficiency could be explored to reduce the burden of policy rollout collection. Evaluating various other visuomotor policy architectures and how their differences manifest in value function performance metrics is another direction to explore.

\section*{Acknowledgments}
This research is supported in part by the DARPA Assured Neuro Symbolic Learning and Reasoning (ANSR) program and by the NSF CAREER program (2240163). The authors would also like to thank Albert Lin for setting up the Franka Panda hardware platform and assisting with the hardware experiments.

\newpage
\bibliographystyle{plainnat}
\bibliography{references}

\cleardoublepage
\newpage

\appendices
\onecolumn
\section{Proof of Proposition \ref{prop:vfunc_liveness_ineq}}\label{appx:proof_liveness_update_ineq}
\begin{proof}
We use a slightly different indexing of states in this proof compared to the statement of the proposition for easier indexing in the proof. Let $\state = \state_0$ and $\state' = \state_1$. 

We start from the definition of value from Eq. \ref{eq:vfunc_liveness}. Note that in Eq.~\ref{eq:liveness_update_ineq_proof_line_traj_exp}, the states $\state_k \ \forall k\in\{0,1,\ldots,\tidxgoal\}$ are states, indexed sequentially, in the state trajectory $\xi$. Throughout the proof, we make use of the law of iterative expectation to break the expectation in Eq.~\ref{eq:liveness_update_ineq_proof_line_full_exp} ultimately into 3 nested expectations in Eq.~\ref{eq:liveness_update_ineq_proof_line_nested_exp}. We make use of Jensen's inequality in Eq.~\ref{eq:liveness_update_ineq_proof_line_jensen_ineq} to move the inner expectation inside the outer $\min$ operator. With $\tfunc(\state_0)$ being a constant, $\min \left\{ \tfunc(\state_0), \min_{\tidx \in \{1, \dots, \tidxgoal\}} \tfunc(\state_\tidx) \right\}$ is concave with respect to $Y = \min_{\tidx \in \{1, \dots, \tidxgoal\}} \tfunc(\state_\tidx)$, and hence with Jensen's Inequality we have $\expectation\left[f(Y)\right]\leq f(\expectation\left[Y\right])$. Detailed derivations are as followed. 

\begin{align}
    \vfuncpi(\state_0) &= \expectation_{\xi \sim \Xi_{\state_0}^{\policy,\tdyn}} \min_{\tidx \in \{0, 1, \dots, \tidxgoal\}} \tfunc(\state_\tidx)\label{eq:liveness_update_ineq_proof_line_traj_exp} \\
    &= \expectation_{\action_0, \state_1, \action_1, \dots, \state_{\tidxgoal}} \min_{\tidx \in \{0, 1, \dots, \tidxgoal\}} \tfunc(\state_\tidx) \label{eq:liveness_update_ineq_proof_line_full_exp}\\
    &= \expectation_{\action_0 \sim \policy} \expectation_{\state_1, \action_1, \dots, \state_{\tidxgoal} | \action_0} \min_{\tidx \in \{0, 1, \dots, \tidxgoal\}} \tfunc(\state_\tidx) \\
    &= \expectation_{\action_0 \sim \policy} \expectation_{\state_1 \sim \tdyn} \expectation_{\action_1, \state_2, \dots, \state_\tidxgoal | \action_0, \state_1} \min_{\tidx \in \{0, 1, \dots, \tidxgoal\}} \tfunc(\state_\tidx) \label{eq:liveness_update_ineq_proof_line_nested_exp} \\
    &= \expectation_{\action_0 \sim \policy} \expectation_{\state_1 \sim \tdyn} \expectation_{\action_1, \state_2, \dots, \state_\tidxgoal|\action_0, \state_1} \min \left\{ \tfunc(\state_0), \min_{\tidx \in \{1, \dots, \tidxgoal\}} \tfunc(\state_\tidx) \right\} \\
    &\le \expectation_{\action_0 \sim \policy} \expectation_{\state_1 \sim \tdyn} \min \left\{ \tfunc(\state_0), \expectation_{\action_1, \state_2, \dots, \state_\tidxgoal|\action_0, \state_1} \min_{\tidx \in \{1, \dots, \tidxgoal\}} \tfunc(\state_\tidx) \right\} \label{eq:liveness_update_ineq_proof_line_jensen_ineq} \\
    &= \expectation_{\action_0 \sim \policy} \expectation_{\state_1 \sim \tdyn} \min \left\{ \tfunc(\state_0), \vfuncpi(\state_1) \right\}
\end{align}

We now show $|\vfuncpi(\state)- \expectation_{\action_0 \sim \policy} \expectation_{\state_1 \sim \tdyn} \min \left\{ \tfunc(\state_0), \vfuncpi(\state_1) \right\}| \leq \sigma_Y$. We first note that $\vfuncpi(\state) \leq \expectation_{\action_0 \sim \policy} \expectation_{\state_1 \sim \tdyn} \min \left\{ \tfunc(\state_0), \vfuncpi(\state_1) \right\}$, and as a result we only need to show $ \expectation_{\action_0 \sim \policy} \expectation_{\state_1 \sim \tdyn} \min \left\{ \tfunc(\state_0), \vfuncpi(\state_1) \right\} - \vfuncpi(\state) \leq \sigma_Y$. We first denote $G = \expectation_{\action_0\sim\policy}\expectation_{\state_1\sim \tdyn} \min \{\tfunc(\state_0), \vfuncpi(\state_1)\} - \vfuncpi(\state_0)$. Then, 

\begin{align}
    G &= \expectation_{\action_0, \state_1} \min \{\tfunc(\state_0), \vfuncpi(\state_1)\} - \vfuncpi(\state_0) \\
    &= \expectation_{\action_0, \state_1} \min \{\tfunc(\state_0), \vfuncpi(\state_1)\} - \expectation_{\action_0, \state_1, \action_1, \ldots} \min\{\tfunc(\state_0), Y\}\\
    &= \expectation_{\action_0, \state_1} \big[\tfunc(\state_0)  - \max\{0, \tfunc(\state_0) - \vfuncpi(\state_1)\}\big] - \expectation_{\action_0, \state_1, \action_1, \ldots} \big[\tfunc(\state_0) - \max\{0, \tfunc(\state_0) - Y\}\big] \label{eq:jensen_gap_proof_g_min_max_identity} \\
    &= \expectation_{\action_0, \state_1, \action_1, \ldots} \max\{0, \tfunc(\state_0) - Y\} - \expectation_{\action_0, \state_1} \max\{0, \tfunc(\state_0) - \vfuncpi(\state_1)\} \label{eq:jensen_gap_proof_g_max}
\end{align}
Note that we use the identity $\min\{a,b\} = a - \max\{0, a-b\}$ in Eq.~\ref{eq:jensen_gap_proof_g_min_max_identity}, and we take $\tfunc(\state_0)$ out of each expectation in Eq~\ref{eq:jensen_gap_proof_g_max}. We consider 3 cases. 

\textit{Case 1: $\tfunc(\state_0) < \tfunc(\state_k)$ for all possible future states $\state_1, \ldots, \state_N$.}
Then $\tfunc(\state_0) - Y < 0$ for any state trajectories, and as a result $\expectation_{\action_0, \state_1, \action_1, \ldots} \max\{0, \tfunc(\state_0) - Y\} = 0$. Similarly, $\tfunc(\state_0) - \vfuncpi(\state_1) < 0$ for any $\action_0$ and $\state_1$. Hence, 
$G = \expectation_{\action_0, \state_1, \action_1, \ldots} \max\{0, \tfunc(\state_0) - Y\} - \expectation_{\action_0, \state_1} \max\{0, \tfunc(\state_0) - \vfuncpi(\state_1)\} = 0 - 0 = 0$.

\textit{Case 2: $\tfunc(\state_0) > \tfunc(\state_k)$ for all possible future states $\state_1, \ldots, \state_N$.} Then, we have $\expectation_{\action_0, \state_1, \action_1, \ldots} \max\{0, \tfunc(\state_0) - Y\} =\tfunc(\state_0) = \expectation_{\action_0, \state_1} \max\{0, \tfunc(\state_0) - \vfuncpi(\state_1)\}$. Hence, $G = \expectation_{\action_0, \state_1, \action_1, \ldots} \max\{0, \tfunc(\state_0) - Y\} - \expectation_{\action_0, \state_1} \max\{0, \tfunc(\state_0) - \vfuncpi(\state_1)\} = 0$. 

\textit{Case 3: Otherwise}. We continue the derivation from Eq.~\ref{eq:jensen_gap_proof_g_max}. 
\begin{align}
    G &= \expectation_{\action_0, \state_1, \action_1, \ldots} \max\{0, \tfunc(\state_0) - Y\} - \expectation_{\action_0, \state_1} \max\{0, \tfunc(\state_0) - \vfuncpi(\state_1)\} \\
    &= \expectation_{\action_0, \state_1}\Big[ \expectation_{\action_1, \state_2, \ldots | \state_1} \max\{0, \tfunc(\state_0) - Y\} - \max\{0, \tfunc(\state_0) - \vfuncpi(\state_1)\} \Big] \\
    &= \expectation_{\action_0, \state_1}\Big[ \expectation_{\action_1, \state_2, \ldots | \state_1} \big[\max\{0, \tfunc(\state_0) - Y\}\big] - \max\{0, \tfunc(\state_0) - \expectation_{\action_1, \state_2, \ldots | \state_1} Y\} \Big] \\
    &= \expectation_{\action_0, \state_1} \expectation_{\action_1, \state_2, \ldots | \state_1} \Big[\max\{0, \tfunc(\state_0) - Y\} - \max\{0, \tfunc(\state_0) - \expectation_{\action_1, \state_2, \ldots | \state_1} Y\} \Big] \\
    &\leq \expectation_{\action_0, \state_1} \expectation_{\action_1, \state_2, \ldots | \state_1} \Big[ |\tfunc(\state_0) - Y - \tfunc(\state_0) + \expectation_{\action_1, \state_2, \ldots | \state_1} [Y] |\Big] \label{eq:jensen_gap_proof_g_1_lip}\\
    &= \expectation_{\action_0, \state_1} \expectation_{\action_1, \state_2, \ldots | \state_1} \Big[ |\expectation_{\action_1, \state_2, \ldots | \state_1} [Y] - Y |\Big] \\
    &\leq \expectation_{\action_0, \state_1} \sqrt{\expectation_{\action_1, \state_2, \ldots | \state_1} \Big[ (\expectation_{\action_1, \state_2, \ldots | \state_1} [Y] - Y )^2\Big] } \label{eq:jensen_gap_proof_g_cs}\\
    &= \expectation_{\action_0, \state_1} \sigma_{Y|\state_1} \\
    &= \sigma_Y
\end{align}
Eq.~\ref{eq:jensen_gap_proof_g_1_lip} is due to the fact that $\max\{0,a\}$ is 1-Lipschitz, and we use Cauchy-Schwarz inequality in Eq.~\ref{eq:jensen_gap_proof_g_cs}. $\sigma_{Y|\state_1}$ is the standard deviation of the random variable $Y = \min_{\tidx \in \{1, \dots, \tidxgoal\}} \tfunc(\state_\tidx)$ starting from state $\state_1$, and its expectation over action $\action_0$ and $\state_1$ is denoted by $\sigma_Y$. Hence, $G \leq \sigma_Y$.

From the 3 cases, we conclude that $|\vfuncpi(\state)- \expectation_{\action_0 \sim \policy} \expectation_{\state_1 \sim \tdyn} \min \left\{ \tfunc(\state_0), \vfuncpi(\state_1) \right\}| \leq \sigma_Y$. 
\end{proof}

\section{Proof of Proposition~\ref{prop:discounted_liveness_update_contraction}}\label{appx:contraction_proof}
\begin{proof}
Let $B$ denote the discounted Bellman operator defined in Eq.~\ref{eq:discounted_liveness_update}. We would like to show that there exists $\kappa\in[0,1)$ such that $\left| B[\vfuncpiover](\state) - B[\vfuncpiover'](\state) \right| \leq \kappa ||\vfuncpiover - \vfuncpiover'||_\infty$. Take $\state\in\sspace$, and let us denote the immediate next state of $\state$ with $\state'$. Also take $\vfuncpiover$ and $\vfuncpiover'$. In Eq.~\ref{eq:contraction_proof_min_line}, we make use of the identity $|\min\{a,b\} - \min\{a,c\}| \leq |b-c|$. The full derivation is as follows. 

\begin{align}
\bigg| &B[\vfuncpiover](\state) - B[\vfuncpiover'](\state) \bigg| \nonumber \\
    &= \bigg| \cancel{(1-\dfac)} + \dfac \min \{ \tfunc(\state), \vfuncpiover(\state') \} \nonumber \\
    &\qquad - \left( \cancel{(1-\dfac)} + \dfac \min \{ \tfunc(\state), \vfuncpiover'(\state') \} \right) \bigg| \\
    &= \dfac \left| \min \{ \tfunc(\state), \vfuncpiover(\state') \} - \min \{ \tfunc(\state), \vfuncpiover'(\state') \} \right| \label{eq:contraction_proof_min_line}\\
    &\le \dfac \left| \vfuncpiover(\state') - \vfuncpiover'(\state') \right|
\end{align}
Since $\big| B[\vfuncpiover](\state) - B[\vfuncpiover'](\state) \big| \leq \dfac \left| \vfuncpiover(\state') - \vfuncpiover'(\state') \right|$ holds $\forall \state\in\sspace$, we have 
\begin{equation}
    \big| B[\vfuncpiover](\state) - B[\vfuncpiover'](\state) \big| \leq \dfac||\vfuncpiover - \vfuncpiover'||_\infty
\end{equation}
Since the discount factor $\dfac$ is chosen in the open interval of $(0,1)$, it serves as the contraction constant $\kappa$ we look for in this proof. Hence, we have shown that for $\kappa = \dfac \in (0,1)$, $\left| B[\vfuncpiover](\state) - B[\vfuncpiover'](\state) \right| \leq \kappa ||\vfuncpiover - \vfuncpiover'||_\infty$.
\end{proof}

\newpage
\section{Method Hyperparameters}\label{appx:case_study_hyperparameters}

The default hyperparameters for our experiments for the LIBERO-Spatial task, Square task, and Cloth Folding task are provided in TABLE~\ref{tab:libero-hyperparams}, \ref{tab:square-hyperparams}, and \ref{tab:cloth-folding-hyperparams}, respectively. 

\begin{table}[h]
\centering
\caption{LIBERO-Spatial Task Hyperparameters}
\label{tab:libero-hyperparams}
\begin{tabularx}{.8\columnwidth}{@{}Xl@{}}
\toprule
\textbf{Hyperparameter} & \textbf{Value} \\ \midrule
\textit{Architecture} & \\
MLP Layers & 5 \\
Hidden Units & 512 \\
Activation & GELU \\
Normalization & Layer Norm \\
Input Dimension & 1544 (SigLIP2) \\ \midrule
\textit{Training} & \\
Learning Rate & $1 \times 10^{-5}$ \\
Batch Size & 512 \\
Training Epochs & 5,000 \\ \midrule
\textit{Task \& Method Constants} & \\
Standard Task Horizon ($T$) & 200 \\
Our Method Discount factor ($\gamma$) & 0.993 \\
TD(0) Discount factor ($\gamma$) & 0.999 \\
TD(0) Reward Normalization & 250 \\
MC / MC-D Reward Normalization & 500 \\
MC-D Bins & 201 \\ \midrule
\textit{Prioritized Experience Replay} & \\
Replay Buffer Capacity & 10,000 \\
Gradient Steps per Batch & 2 \\
$\alpha$ & 0.6 \\
$\beta_{start}$ & 0.4 \\
$\beta_{end}$ & 1 \\ \bottomrule
\end{tabularx}
\end{table}

\begin{table}[h]
\centering
\caption{Square Task Hyperparameters}
\label{tab:square-hyperparams}
\begin{tabularx}{.8\columnwidth}{@{}Xl@{}}
\toprule
\textbf{Hyperparameter} & \textbf{Value} \\ \midrule
\textit{Architecture} & \\
MLP Layers & 5 \\
Hidden Units & 512 \\
Activation & GELU \\
Normalization & Layer Norm \\
Input Dimension & 3090 (SigLIP2) \\ \midrule
\textit{Training} & \\
Learning Rate & $1 \times 10^{-5}$ \\
Batch Size & 512 \\
Training Epochs & 5,000 \\ \midrule
\textit{Task \& Method Constants} & \\
Standard Task Horizon ($T$) & 100 \\
Our Method Discount factor ($\gamma$) & 0.993 \\
TD(0) Discount factor ($\gamma$) & 0.995 \\
TD(0) Reward Normalization & 200 \\
MC / MC-D Reward Normalization & 400 \\
MC-D Bins & 201 \\ \midrule
\textit{Prioritized Experience Replay} & \\
Replay Buffer Capacity & 10,000 \\
Gradient Steps per Batch & 2 \\
$\alpha$ & 0.6 \\
$\beta_{start}$ & 0.4 \\
$\beta_{end}$ & 1 \\ \bottomrule
\end{tabularx}
\end{table}

\begin{table}[h]
\centering
\caption{Cloth Folding Task Hyperparameters}
\label{tab:cloth-folding-hyperparams}
\begin{tabularx}{.8\columnwidth}{@{}Xl@{}}
\toprule
\textbf{Hyperparameter} & \textbf{Value} \\ \midrule
\textit{Architecture} & \\
MLP Layers & 5 \\
Hidden Units & 512 \\
Activation & GELU \\
Normalization & Layer Norm \\
Input Dimension & 768 (SigLIP2) \\ \midrule
\textit{Training} & \\
Learning Rate & $2 \times 10^{-5}$ \\
Batch Size & 512 \\
Training Epochs & 5,000 \\ \midrule
\textit{Task \& Method Constants} & \\
Standard Task Horizon ($T$) & 100 \\
Our Method Discount factor ($\gamma$) & 0.99 \\
TD(0) Discount factor ($\gamma$) & 0.99 \\
TD(0) Reward Normalization & 200 \\
MC / MC-D Reward Normalization & 400 \\
MC-D Bins & 201 \\ \midrule
\textit{Prioritized Experience Replay} & \\
Replay Buffer Capacity & 10,000 \\
Gradient Steps per Batch & 2 \\
$\alpha$ & 0.6 \\
$\beta_{start}$ & 0.4 \\
$\beta_{end}$ & 1 \\ \bottomrule
\end{tabularx}
\end{table}

\clearpage

\section{Detailed Experiment Results}\label{appx:experiment_tables}
In this appendix, we present detailed experiment results for the three case studies. Each experiment was run with 5 random seeds and the result is reported in the format \texttt{mean}$\pm$\texttt{standard deviation}. In TABLE~\ref{tab:metrics_results_full_dataset}, we present results for all 3 metrics for the experiments with full datasets (200 episodes for the simulation tasks and 150 episodes for the hardware cloth folding task). Furthermore, we present the results for the dataset ablation, described in Sec.~\ref{subsec:dataset_size_ablation}, in TABLE~\ref{tab:dataset_ablation_success_metric}, \ref{tab:dataset_ablation_failure_metric}, and \ref{tab:dataset_ablation_composite_metric}.

% Table 1: largest dataset size, all metrics
\begin{table}[ht]
\centering
\footnotesize
\begin{tabular}{llccc}
\toprule
Task & Method & Success Metric ($\uparrow$) & Failure Metric ($\uparrow$) & Composite Metric ($\uparrow$) \\
\midrule
 & \shade Ours & \shade $\mathbf{0.9796 \pm 0.0039}$ & \shade $0.9242 \pm 0.0037$ & \shade $\mathbf{0.9519 \pm 0.0017}$ \\
 & \shade Ours-NB & \shade $0.8397 \pm 0.0079$ & \shade $0.9767 \pm 0.0055$ & \shade $0.9082 \pm 0.0043$ \\
 & TD(0) & $0.7667 \pm 0.0101$ & $0.9974 \pm 0.0005$ & $0.8821 \pm 0.0048$ \\
 & MC & $0.7012 \pm 0.0117$ & $\mathbf{0.9993 \pm 0.0003}$ & $0.8502 \pm 0.0057$ \\
\multirow{-5}{*}{LIBERO-Spatial} & MC-D & $0.6706 \pm 0.0296$ & $0.9715 \pm 0.0036$ & $0.8210 \pm 0.0154$ \\
\midrule
 & \shade Ours & \shade $\mathbf{0.7278 \pm 0.0242}$ & \shade $0.8914 \pm 0.0233$ & \shade $\mathbf{0.8096 \pm 0.0017}$ \\
 & \shade Ours-NB & \shade $0.6283 \pm 0.0083$ & \shade $0.9217 \pm 0.0057$ & \shade $0.7750 \pm 0.0069$ \\
 & TD(0) & $0.5937 \pm 0.0081$ & $0.9267 \pm 0.0024$ & $0.7602 \pm 0.0032$ \\
 & MC & $0.5634 \pm 0.0101$ & $\mathbf{0.9337 \pm 0.0021}$ & $0.7486 \pm 0.0047$ \\
\multirow{-5}{*}{Square} & MC-D & $0.5214 \pm 0.0072$ & $0.9180 \pm 0.0015$ & $0.7197 \pm 0.0041$ \\
\midrule
 & \shade Ours & \shade $\mathbf{0.6429 \pm 0.0304}$ & \shade $0.9227 \pm 0.0039$ & \shade $\mathbf{0.7828 \pm 0.0164}$ \\
 & \shade Ours-NB & \shade $0.3038 \pm 0.0120$ & \shade $0.9713 \pm 0.0047$ & \shade $0.6376 \pm 0.0064$ \\
 & TD(0) & $0.2392 \pm 0.0082$ & $0.0641 \pm 0.0090$ & $0.1516 \pm 0.0036$ \\
 & MC & $0.1995 \pm 0.0106$ & $\mathbf{0.9911 \pm 0.0019}$ & $0.5953 \pm 0.0059$ \\
\multirow{-5}{*}{Cloth Folding} & MC-D & $0.2444 \pm 0.0052$ & $0.9536 \pm 0.0039$ & $0.5990 \pm 0.0039$ \\
\midrule\bottomrule
\end{tabular}
\caption{Results of all 3 metrics for full dataset for all 3 case studies across 5 training seeds.}
\label{tab:metrics_results_full_dataset}
\end{table}

% Table 2: success metric across dataset sizes
\begin{table}[ht]
\centering
\footnotesize
\begin{tabular}{ll c c c c}
\toprule
& & \multicolumn{4}{c}{Success Metric ($\uparrow$)} \\
\cmidrule(lr){3-6}
Task & Method & 50 & 100 & 150 & 200 \\
\midrule
 & \shade Ours & \shade $\mathbf{0.8383 \pm 0.0096}$ & \shade $\mathbf{0.9172 \pm 0.0072}$ & \shade $\mathbf{0.9469 \pm 0.0048}$ & \shade $\mathbf{0.9796 \pm 0.0039}$ \\
 & \shade Ours-NB & \shade $0.6571 \pm 0.0094$ & \shade $0.7399 \pm 0.0024$ & \shade $0.8020 \pm 0.0054$ & \shade $0.8397 \pm 0.0079$ \\
 & TD(0) & $0.4826 \pm 0.0130$ & $0.6220 \pm 0.0114$ & $0.6945 \pm 0.0108$ & $0.7667 \pm 0.0101$ \\
 & MC & $0.3887 \pm 0.0178$ & $0.5277 \pm 0.0095$ & $0.6243 \pm 0.0068$ & $0.7012 \pm 0.0117$ \\
\multirow{-5}{*}{LIBERO-Spatial} & MC-D & $0.4088 \pm 0.0224$ & $0.5356 \pm 0.0295$ & $0.6163 \pm 0.0539$ & $0.6706 \pm 0.0296$ \\
\midrule
 & \shade Ours & \shade $\mathbf{0.6981 \pm 0.0085}$ & \shade $\mathbf{0.7247 \pm 0.0101}$ & \shade $\mathbf{0.7072 \pm 0.0033}$ & \shade $\mathbf{0.7278 \pm 0.0242}$ \\
 & \shade Ours-NB & \shade $0.6261 \pm 0.0087$ & \shade $0.6467 \pm 0.0081$ & \shade $0.6025 \pm 0.0445$ & \shade $0.6283 \pm 0.0083$ \\
 & TD(0) & $0.4886 \pm 0.0093$ & $0.5813 \pm 0.0079$ & $0.5837 \pm 0.0106$ & $0.5937 \pm 0.0081$ \\
 & MC & $0.4349 \pm 0.0059$ & $0.5603 \pm 0.0081$ & $0.5521 \pm 0.0079$ & $0.5634 \pm 0.0101$ \\
\multirow{-5}{*}{Square} & MC-D & $0.4566 \pm 0.0121$ & $0.5171 \pm 0.0143$ & $0.5047 \pm 0.0170$ & $0.5214 \pm 0.0072$ \\
\midrule
 & \shade Ours & \shade $\mathbf{0.4443 \pm 0.0110}$ & \shade $\mathbf{0.5589 \pm 0.0207}$ & \shade $\mathbf{0.6429 \pm 0.0304}$ & \shade -- \\
 & \shade Ours-NB & \shade $0.2895 \pm 0.0062$ & \shade $0.2894 \pm 0.0081$ & \shade $0.3038 \pm 0.0120$ & \shade -- \\
 & TD(0) & $0.2784 \pm 0.0053$ & $0.2351 \pm 0.0079$ & $0.2392 \pm 0.0082$ & -- \\
 & MC & $0.2394 \pm 0.0042$ & $0.2060 \pm 0.0083$ & $0.1995 \pm 0.0106$ & -- \\
\multirow{-5}{*}{Cloth Folding} & MC-D & $0.1855 \pm 0.0170$ & $0.1975 \pm 0.0240$ & $0.2444 \pm 0.0052$ & -- \\
\midrule\bottomrule
\end{tabular}
\caption{Success Metric across the dataset size ablation.}
\label{tab:dataset_ablation_success_metric}
\end{table}

% Table 3: failure metric across dataset sizes
\begin{table}[ht]
\centering
\footnotesize
\begin{tabular}{ll c c c c}
\toprule
& & \multicolumn{4}{c}{Failure Metric ($\uparrow$)} \\
\cmidrule(lr){3-6}
Task & Method & 50 & 100 & 150 & 200 \\
\midrule
 & \shade Ours & \shade $0.9256 \pm 0.0034$ & \shade $0.9491 \pm 0.0029$ & \shade $0.9369 \pm 0.0088$ & \shade $0.9242 \pm 0.0037$ \\
 & \shade Ours-NB & \shade $0.9862 \pm 0.0078$ & \shade $0.9785 \pm 0.0180$ & \shade $0.9775 \pm 0.0057$ & \shade $0.9767 \pm 0.0055$ \\
 & TD(0) & $0.9966 \pm 0.0005$ & $0.9987 \pm 0.0008$ & $0.9995 \pm 0.0008$ & $0.9974 \pm 0.0005$ \\
 & MC & $\mathbf{0.9980 \pm 0.0004}$ & $\mathbf{0.9994 \pm 0.0002}$ & $\mathbf{0.9997 \pm 0.0004}$ & $\mathbf{0.9993 \pm 0.0003}$ \\
\multirow{-5}{*}{LIBERO-Spatial} & MC-D & $0.9850 \pm 0.0045$ & $0.9828 \pm 0.0035$ & $0.9797 \pm 0.0027$ & $0.9715 \pm 0.0036$ \\
\midrule
 & \shade Ours & \shade $0.8688 \pm 0.0013$ & \shade $0.8906 \pm 0.0058$ & \shade $0.9054 \pm 0.0026$ & \shade $0.8914 \pm 0.0233$ \\
 & \shade Ours-NB & \shade $0.8818 \pm 0.0049$ & \shade $0.9020 \pm 0.0072$ & \shade $0.9190 \pm 0.0039$ & \shade $0.9217 \pm 0.0057$ \\
 & TD(0) & $0.8471 \pm 0.0908$ & $0.9164 \pm 0.0074$ & $0.9134 \pm 0.0355$ & $0.9267 \pm 0.0024$ \\
 & MC & $\mathbf{0.9268 \pm 0.0020}$ & $\mathbf{0.9230 \pm 0.0030}$ & $\mathbf{0.9360 \pm 0.0012}$ & $\mathbf{0.9337 \pm 0.0021}$ \\
\multirow{-5}{*}{Square} & MC-D & $0.9093 \pm 0.0045$ & $0.9132 \pm 0.0018$ & $0.9246 \pm 0.0052$ & $0.9180 \pm 0.0015$ \\
\midrule
 & \shade Ours & \shade $0.8295 \pm 0.0027$ & \shade $0.9152 \pm 0.0048$ & \shade $0.9227 \pm 0.0039$ & \shade -- \\
 & \shade Ours-NB & \shade $0.8967 \pm 0.0117$ & \shade $0.9339 \pm 0.0616$ & \shade $0.9713 \pm 0.0047$ & \shade -- \\
 & TD(0) & $0.1646 \pm 0.0125$ & $0.0703 \pm 0.0041$ & $0.0641 \pm 0.0090$ & -- \\
 & MC & $\mathbf{0.9537 \pm 0.0061}$ & $\mathbf{0.9929 \pm 0.0028}$ & $\mathbf{0.9911 \pm 0.0019}$ & -- \\
\multirow{-5}{*}{Cloth Folding} & MC-D & $0.9478 \pm 0.0107$ & $0.9665 \pm 0.0085$ & $0.9536 \pm 0.0039$ & -- \\
\midrule\bottomrule
\end{tabular}
\caption{Failure Metric across the dataset size ablation.}
\label{tab:dataset_ablation_failure_metric}
\end{table}

% Table 4: composite metric across dataset sizes
\begin{table}[ht]
\centering
\footnotesize
\begin{tabular}{ll c c c c}
\toprule
& & \multicolumn{4}{c}{Composite Metric ($\uparrow$)} \\
\cmidrule(lr){3-6}
Task & Method & 50 & 100 & 150 & 200 \\
\midrule
 & \shade Ours & \shade $\mathbf{0.8820 \pm 0.0052}$ & \shade $\mathbf{0.9332 \pm 0.0027}$ & \shade $\mathbf{0.9419 \pm 0.0045}$ & \shade $\mathbf{0.9519 \pm 0.0017}$ \\
 & \shade Ours-NB & \shade $0.8216 \pm 0.0058$ & \shade $0.8592 \pm 0.0096$ & \shade $0.8897 \pm 0.0034$ & \shade $0.9082 \pm 0.0043$ \\
 & TD(0) & $0.7396 \pm 0.0065$ & $0.8104 \pm 0.0055$ & $0.8470 \pm 0.0054$ & $0.8821 \pm 0.0048$ \\
 & MC & $0.6933 \pm 0.0087$ & $0.7635 \pm 0.0047$ & $0.8120 \pm 0.0034$ & $0.8502 \pm 0.0057$ \\
\multirow{-5}{*}{LIBERO-Spatial} & MC-D & $0.6969 \pm 0.0099$ & $0.7592 \pm 0.0143$ & $0.7980 \pm 0.0271$ & $0.8210 \pm 0.0154$ \\
\midrule
 & \shade Ours & \shade $\mathbf{0.7834 \pm 0.0037}$ & \shade $\mathbf{0.8076 \pm 0.0045}$ & \shade $\mathbf{0.8063 \pm 0.0015}$ & \shade $\mathbf{0.8096 \pm 0.0017}$ \\
 & \shade Ours-NB & \shade $0.7540 \pm 0.0049$ & \shade $0.7744 \pm 0.0050$ & \shade $0.7608 \pm 0.0211$ & \shade $0.7750 \pm 0.0069$ \\
 & TD(0) & $0.6678 \pm 0.0420$ & $0.7488 \pm 0.0052$ & $0.7485 \pm 0.0213$ & $0.7602 \pm 0.0032$ \\
 & MC & $0.6808 \pm 0.0032$ & $0.7417 \pm 0.0030$ & $0.7441 \pm 0.0037$ & $0.7486 \pm 0.0047$ \\
\multirow{-5}{*}{Square} & MC-D & $0.6829 \pm 0.0066$ & $0.7151 \pm 0.0072$ & $0.7146 \pm 0.0069$ & $0.7197 \pm 0.0041$ \\
\midrule
 & \shade Ours & \shade $\mathbf{0.6369 \pm 0.0050}$ & \shade $\mathbf{0.7371 \pm 0.0120}$ & \shade $\mathbf{0.7828 \pm 0.0164}$ & \shade -- \\
 & \shade Ours-NB & \shade $0.5931 \pm 0.0040$ & \shade $0.6117 \pm 0.0288$ & \shade $0.6376 \pm 0.0064$ & \shade -- \\
 & TD(0) & $0.2215 \pm 0.0085$ & $0.1527 \pm 0.0047$ & $0.1516 \pm 0.0036$ & -- \\
 & MC & $0.5965 \pm 0.0027$ & $0.5995 \pm 0.0053$ & $0.5953 \pm 0.0059$ & -- \\
\multirow{-5}{*}{Cloth Folding} & MC-D & $0.5667 \pm 0.0090$ & $0.5820 \pm 0.0159$ & $0.5990 \pm 0.0039$ & -- \\
\midrule\bottomrule
\end{tabular}
\caption{Composite Metric across the dataset size ablation.}
\label{tab:dataset_ablation_composite_metric}
\end{table}

% Table 5: cloth folding encoder ablation (Our Method P2, size 150)
\begin{table}[ht]
\centering
\footnotesize
\begin{tabular}{lccc}
\toprule
Encoder & Success Metric ($\uparrow$) & Failure Metric ($\uparrow$) & Composite Metric ($\uparrow$) \\
\midrule
SigLIP2 & $0.6429 \pm 0.0304$ & $\mathbf{0.9227 \pm 0.0039}$ & $0.7828 \pm 0.0164$ \\
CLIP & $0.6793 \pm 0.0096$ & $0.9105 \pm 0.0053$ & $0.7949 \pm 0.0063$ \\
DINOv2 & $\mathbf{0.6878 \pm 0.0250}$ & $0.9203 \pm 0.0089$ & $\mathbf{0.8040 \pm 0.0124}$ \\
\bottomrule
\end{tabular}
\caption{Cloth Folding task encoder ablation at dataset size 150 across training seeds.}
\label{tab:encoder_ablation_results}
\end{table}

\clearpage

\end{document}